\documentclass[11pt]{article}

\usepackage[preprint]{acl}

\usepackage{times}
\usepackage{latexsym}

\usepackage[T1]{fontenc}

\usepackage[utf8]{inputenc}

\usepackage{microtype}

\usepackage{inconsolata}

\usepackage{graphicx}
\usepackage{pifont}
\usepackage{hyperref}

\usepackage{algorithm}
\usepackage{algpseudocode}
\usepackage{amsmath,amssymb,amsthm}
\usepackage{booktabs}
\usepackage{pifont}
\usepackage{CJKutf8}
\usepackage{caption}
\newtheorem{theorem}{Theorem}

\newtheorem{proposition}[theorem]{Proposition}

\newtheorem{remark}{Remark}

\hypersetup{
  colorlinks=true,
  linkcolor=blue!70!black,
  citecolor=green!50!black,
  urlcolor=cyan!60!black,
}

\newcommand{\bq}{\mathbf{q}}
\newcommand{\bk}{\mathbf{k}}

\newcommand{\calW}{\mathcal{W}}

\newcommand{\rope}{\textsc{RoPE}}
\newcommand{\nope}{\textsc{NoPE}}
\newcommand{\swa}{\textsc{SWA}}

\title{Periodic RoPE for Infinite Context LLMs}

\author{Simin Huo \\
  Shanghai Jiao Tong University 
}

\begin{document}
\maketitle
\begin{abstract}
The ability to process ultra-long contexts is crucial for large language models (LLMs) to perform long-horizon tasks. While recent efforts have extended context windows to 1M and beyond, model performance degrades when sequence length exceeds the pre-trained range of positional encodings (e.g., RoPE), i.e., position exhaustion. This fundamental limitation must be overcome to achieve a truly infinite context. To address it, we propose Periodic RoPE (P-RoPE), a positional encoding mechanism designed to circumvent this exhaustion. It operates in conjunction with sliding window attention (SWA) to capture local dependencies and relative positions within each window. This local layer is then complemented by a global attention layer with No Positional Encoding (NoPE), enabling unbounded interaction across the entire sequence without positional constraints. By stacking these two types of layers, the model avoids the need for positional extrapolation to generalize longer and theoretically supports an infinite context window. Empirical results show that our model, MiniWin, outperforms MiniMInd with standard GPT architectures in long-context efficiency and stability. Our work provides a possible pathway toward LLMs with genuine infinite-context understanding. The code is available at \href{https://github.com/Cominder/miniwin}{https://github.com/Cominder/miniwin}. 
\end{abstract}

\begin{figure}[t]
  \includegraphics[width=\columnwidth]{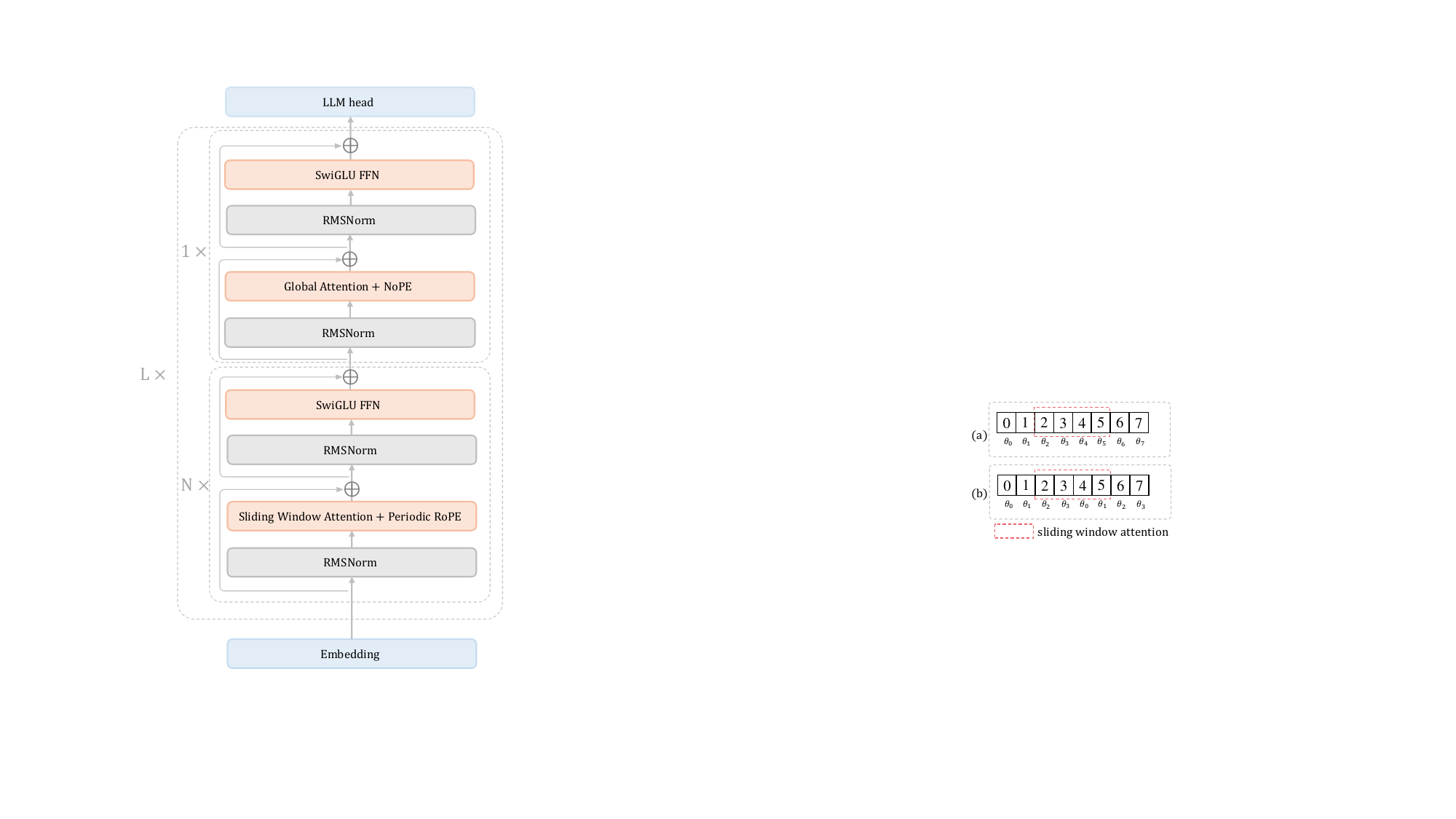}
  \caption{(a) Standard RoPE, positional encoding will inevitably run out  as sequence length grows; (b) Periodic RoPE (ours), the cycle will never end no matter how long the sequence.}
  \label{fig:idea}
\end{figure}

\section{Introduction}
\label{sec:intro}
Positional encoding is a critical but fragile component of Transformer
language models~\citep{vaswani2017attention}. Rotary Position Embedding (\rope{}) \citep{su2024roformer} has become the popular positional encoding in modern large language models (LLMs) such as Qwen~\citep{yang2025qwen3}, and DeepSeek~\citep{deepseekai2024deepseek}. \rope{} encodes absolute position $t$ as a rotation $R_t$ applied to query and key vectors, yielding a relative-offset attention score that generalizes to positions within the training range. 
However, when the context sequence grows beyond the maximum position seen during training, the rotary angles enter a regime the model has never observed, causing perplexity to spike and generation quality to degrade~\citep{press2022alibi,chen2023extending}.
Several remedies have been proposed, such as positional interpolation~\citep{chen2023extending}, NTK-aware scaling~\citep{peng2023ntk}, or YaRN~\citep{peng2023yarn} to extend usable context at inference time, but they fundamentally \emph{stretch} or \emph{interpolate} the position space and will \emph{always exhaust positional encoding} like RoPE, without eliminating the constraint.

Inspired by the Iwin Transformer~\citep{huo2025iwin}, which proposes a long-context architecture that combines causal convolution for local connectivity with RTR window attention (RTR-WA) using No Positional Encoding (NoPE) for long-range dependencies in discussions. The authors argue that causal convolution provides implicit positional encoding and makes it possible to build position-embedding-free LLMs for infinite context. However, we observed that models trained in this configuration suffer from severe \emph{repetition degeneration}. We hypothesize that the root cause of this failure lies in the nature of convolutional fixed weights. To address it, we replace the causal convolution with SWA using ALiBi~\citep{press2022alibi}, clamped to the window size $W$ so that the linear distance penalty is non-zero only within $[0, W-1]$. This combination \textbf{((\swa{} + ALiBi) + (RTR-WA + NoPE))} eliminates the repetition degeneration and yields coherent text generation. Furthermore, we find that replacing the RTR branch with global attention (GA) interleaved with \swa{} + ALiBi layers (i.e., \textbf{((\swa{} + ALiBi) + (GA + NoPE))} leads to higher generation quality. While ALiBi is effective, it has limitations, such as the need for a manually set slope. Consequently, we seek a \rope{}-based alternative. We observe that a local PE in \swa{} does not need to be globally unique; it only needs to be \emph{unique within each window}. Accordingly, we designed a periodic RoPE with a sliding window size $W$ as its period to \emph{avoid exhaustion}, which we refer to as Periodic RoPE (P-RopE). Finally, we get \textbf{((\swa{} + P-RoPE) + (GA + NoPE))}.

\section{Related Work}
\label{sec:related}

\paragraph{Long-context positional encoding.}
\rope{} \citep{su2024roformer} encodes relative offsets through rotation
matrices; ALiBi \citep{press2022alibi} adds a linear penalty to attention
logits. Both degrade beyond the training horizon. Interpolation methods
such as Position Interpolation \citep{chen2023extending}, YaRN \citep{peng2023yarn}, and LongRoPE \citep{ding2024longrope} extend the range but require fine-tuning and do not guarantee infinite context.\nope{} \citep{kazemnejad2023nope} drops positional encoding entirely and relies on the causal mask for order, showing competitive performance.

\paragraph{Long-context LLMs.}
Several recent architectures aim for efficient long-context processing beyond standard Transformers. SWAN-GPT~\citep{puvvada2025swangpt} interleaves NoPE global attention with RoPE-based sliding window attention, achieving 128k extrapolation without long training. Samba\citep{samba2025} combines Mamba (SSM) with sliding window attention, extrapolating from 4k to 1M tokens with sub-quadratic complexity. InfLLM\citep{infllm2024} and EM-LLM \citep{emllm2025} adopt training-free retrieval mechanisms—dynamic memory bank and event-cognition segmentation respectively, enabling theoretically infinite context. FocusLLM \citep{focusllm2025} and LCIRC \citep{lcirc2025} compress long sequences via dynamic block extraction or recurrent compression, extending legacy models at low cost. Our model is similar to SWAN-GPT~\citep{puvvada2025swangpt}, the only difference being that we use a Periodic RoPE instead of a standard RoPE.

\begin{figure}[t]
  \includegraphics[width=\columnwidth]{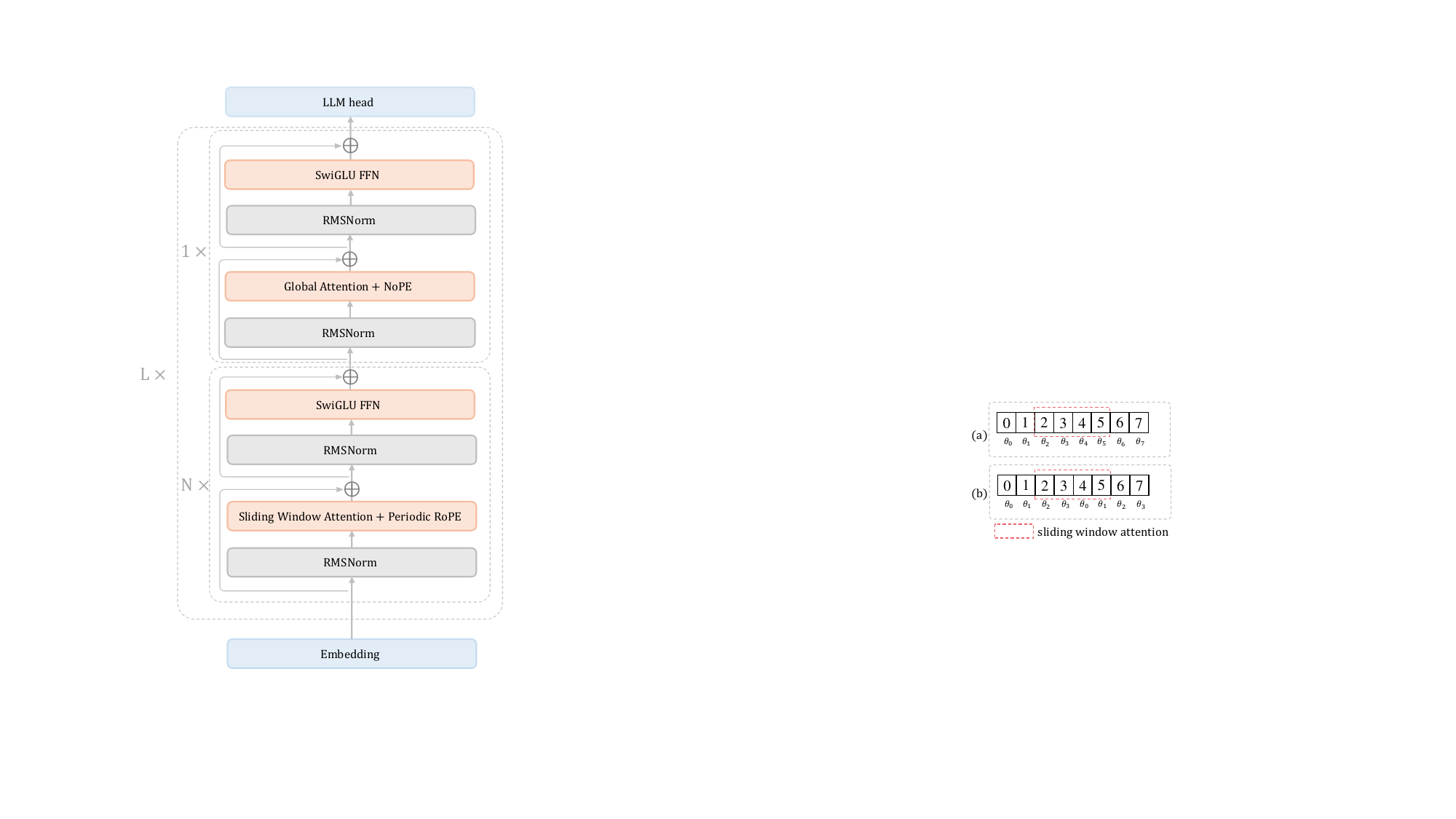}
  \caption{\textbf{Overview of the model architecture.} The model consists of \( L = 2 \) Hybrid Blocks, each alternating \( N = 3 \) Sliding Window Attention (SWA) with Periodic RoPE blocks and one Global Attention (GA) with NoPE block. A dense FFN follows each attention block.}
  \label{fig:arch}
\end{figure}

\section{Method}

\subsection{Rotary Position Embedding Recap}
\label{sec:rope}

For a token at position $t$ with head dimension $d$, \rope{} defines
frequency components
\begin{equation}
  \theta_i = \frac{1}{\Theta^{2i/d}}, \quad i = 0, 1, \ldots, \tfrac{d}{2}-1,
  \label{eq:freqs}
\end{equation}
where $\Theta$ is a base constant (typically $10^4$–$10^6$). The rotation
matrix $R_t \in \mathbb{R}^{d\times d}$ is block-diagonal with $2{\times}2$
blocks
\begin{equation}
  \begin{pmatrix} \cos(t\,\theta_i) & -\sin(t\,\theta_i) \\
                  \sin(t\,\theta_i) &  \cos(t\,\theta_i) \end{pmatrix}.
  \label{eq:rot}
\end{equation}
A query $\bq_t$ and key $\bk_s$ are rotated before the dot product:
\begin{equation}
  \bq_t^{\top}\bk_s = (R_t\tilde{\bq}_t)^{\top}(R_s\tilde{\bk}_s)
                     = \tilde{\bq}_t^{\top}R_{t-s}\tilde{\bk}_s,
  \label{eq:rope_score}
\end{equation}
where the last equality exploits $R_t^{\top}R_s = R_{t-s}$. The attention
score therefore depends only on the \emph{relative offset} $t-s$, making
\rope{} translation-equivariant in principle.  In practice, however, only
offsets seen during training are generalised reliably. When the \emph{absolute} position index $t$ grows unboundedly during inference.  At generation step $t > T_{\max}$ (the training context length), the rotation $R_t$ in Eq.\,\eqref{eq:rope_score} references angles $t\,\theta_i$ that the model has never optimised for, causing numerical instability and quality degradation.

\subsection{Periodic RoPE (P-RoPE)}
\label{sec:prope}

\textbf{Definition.} Instead of using the absolute position $t$, we combine Periodic \rope{} with causal \swa{} of window size $W$ restricts each query at position $t$ to attend only to positions in $\calW(t) = \{t-W+1,\ldots, t\}$. we define the \emph{periodic position index}
\begin{equation}
  \phi(t) = t \bmod W
  \label{eq:periodic_id}
\end{equation}
and apply \rope{} using $R_{\phi(t)}$ rather than $R_t$, so we have

\begin{equation}
  \begin{pmatrix} \cos(\phi(t)\,\theta_i) & -\sin(\phi(t)\,\theta_i) \\
                  \sin(\phi(t)\,\theta_i) &  \cos(\phi(t)\,\theta_i) \end{pmatrix}.
  \label{eq:prot}
\end{equation}

The rotation table $\{R_0, R_1, \ldots, R_{W-1}\}$ is precomputed once and indexed cyclically throughout generation.  Because all indices stay in $\{0,\ldots,W-1\}$, the model \emph{never} accesses a position index outside its training range, regardless of the absolute sequence length.

A natural concern is whether two tokens at different absolute positions $t,t'$ but the same periodic index $\phi(t) = \phi(t') = r$ can confuse the model. We prove that two tokens that share a periodic index can never appear in the same sliding-window attention computation.

\begin{proposition}[Same-$\phi$ tokens never co-occur in a window]
\label{prop:separation}
Let $t \neq t'$ satisfy $\phi(t)=\phi(t')$, i.e.\ $|t'-t|=kW$ for some
integer $k\ge 1$.  Then for any query position $q$, the causal sliding window $\calW(q) = \{\max(0,q-W+1), \ldots, q\}$ contains at most one of $\{t, t'\}$.
\end{proposition}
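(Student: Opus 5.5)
The plan is a direct diameter argument: the window $\calW(q)$ is a set of consecutive integers too short to contain two points that are at least $W$ apart. I will argue by contradiction. Suppose both $t$ and $t'$ lie in $\calW(q)$. Without loss of generality $t < t'$, so by hypothesis $t' - t = kW$ with $k \ge 1$, and hence $t' - t \ge W$.

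Next I bound the spread of $\calW(q)$. Every element $s$ satisfies $\max(0, q-W+1) \le s \le q$. Therefore any two elements $s_1 \le s_2$ obey
\[
s_2 - s_1 \le q - \max(0,q-W+1) \le q-(q-W+1) = W-1 .
\]
The truncation at $0$ only shrinks the window, so it can only make this bound tighter. That is why the $\max$ causes no difficulty. Applying the bound to $s_1=t$ and $s_2=t'$ gives $kW = t'-t \le W-1 < W$, which contradicts $t'-t \ge W$. So at most one of $t, t'$ belongs to $\calW(q)$.

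I would also record why the hypothesis has the stated form. With nonnegative integer positions, $\phi(t)=\phi(t')$ means $t \equiv t' \pmod W$, that is, $W \mid (t'-t)$. Since $t \ne t'$, we get $|t'-t| = kW$ with $k\ge 1$.

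There is no real obstacle. The only point needing care is that the window has exactly $W$ elements, from $q-W+1$ to $q$, so its diameter is $W-1$ rather than $W$. With diameter $W-1$ the inequality is strict and the proof closes. I would close with the consequence: within any single \swa{} computation, the periodic indices $\phi(s)$ for $s \in \calW(q)$ are pairwise distinct. Hence P-RoPE introduces no positional aliasing inside a window.
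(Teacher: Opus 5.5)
Your proposal is correct and follows the paper's own argument: both proofs note that $|t'-t| = kW \ge W$ exceeds the largest possible distance $W-1$ between two positions of $\calW(q)$. Your version is a bit more careful. You state explicitly that the window contains $W$ positions and has diameter $W-1$, whereas the paper loosely says it ``spans $W-1$ consecutive positions''. You also note that the truncation at $0$ only tightens the bound.
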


\begin{proof}
The window spans $W-1$ consecutive positions. Since $|t'-t|=kW\ge W>W-1$, both $t$ and $t'$ cannot belong to $\calW(q)$.
\end{proof}

When $\phi(t) \neq \phi(t') $ but $t' \in \calW(t)$, which means $t,t'$
are in the same sliding window, we have 2 cases.

\paragraph{Case 1.} $\bigl(\lfloor t/W\rfloor = \lfloor t'/W\rfloor\bigr)$. Write $t=kW{+}a$, $t'=kW{+}b$ with $0\le b<a\le W{-}1$.
Then $\phi(t)-\phi(t')=a-b=t-t'$. So P-\rope{} is identical to standard \rope{} at the true relative offset.

\paragraph{Case 2.} \label{case2}  $\bigl(\lfloor t/W\rfloor \neq \lfloor t'/W\rfloor\bigr)$. Write $t=kW{+}a$, $t'=(k{-}1)W{+}b$ with $0\le a<b\le W{-}1$ (the constraint $b>a$ ensures $t'\ge t-W+1$, i.e.\ $t'\in\calW(t)$).
Then $\phi(t)-\phi(t')=a-b<0$, while the true distance $t-t'=W{+}a{-}b>0$, so we have $\phi(t)-\phi(t') \;=\; (t-t') - W.$

The effective offset is off by $W$: the key appears $W{-}d$ steps ahead of the query rather than $d$ steps behind it ($d=t-t'$). This may have some issues. But we think that the map $\phi$ is deterministic, so period boundaries $kW$ occur at identical absolute positions during training and inference. The model, therefore, observes this pattern throughout training and can learn compensating representations in some way. Empirical results confirm that the model can generate coherent context correctly.

\begin{remark}
For tokens with $t < W-1$ (the sequence head), the window
$\calW(t) = \{0, \ldots, t\}$ has fewer than $W$ elements and all
absolute positions are already in $\{0,\ldots,W-2\} \subset \{0,\ldots,W-1\}$, so $\phi(t')=t'$ and standard \rope{} is recovered exactly.
\end{remark}

\subsection{NoPE Global Attention for Long-Range Context}
\label{sec:nope}

P-\rope{} \swa{} layers have bounded receptive fields; to propagate
long-range dependencies we interleave \emph{global} layers that perform full
causal attention without any positional encoding (\nope{}).  The attention
score is simply
\begin{equation}
  A_{ts} = \frac{\tilde{\bq}_t^{\top}\tilde{\bk}_s}{\sqrt{d_h}},
  \quad s \leq t,
\end{equation}
with no rotary rotation.  Ordering information is available implicitly: the
causal mask enforces $s \leq t$, and the hidden states already carry
localised positional information from the P-\rope{} layers below.
\nope{} layers can therefore attend over the entire history without any
restriction on context length.

\section{Experimental Setup}

We train the MiniWin following the MiniMind recipe
\citep{minimind}.  Tables~\ref{tab:cfg_small}
list the MiniWin's configurations.

\begin{table}[h]
\centering
\caption{MiniWin ({\raise.17ex\hbox{$\scriptstyle\sim$}}26\,M parameters).}
\label{tab:cfg_small}
\begin{tabular}{lc}
\toprule
\textbf{Configuration} & \textbf{Value} \\
\midrule
Hidden size $d_{\text{model}}$ & 512 \\
Total layers & 8 \\
\quad SWA layers (S) & 6 \\
\quad Global layers (L) & 2 \\
Q\,/\,KV heads & 8\,/\,2 \\
Head dimension ($d_h$) & 64 \\
FFN intermediate size & 1{,}408 \\
Sliding window size $W$ & 64 \\
Attention pattern & \texttt{SSSL} \\
\bottomrule
\end{tabular}
\end{table}

In stages of \textbf{pre-training} and \textbf{supervised fine-tuning (SFT)}, we truncate sequences to 512 tokens, use AdamW with cosine
learning-rate decay and gradient clipping at 1.0, and train in BFloat16
mixed precision on a single NVIDIA RTX~3090. We report results on C-Eval \citep{ceval}, and CMMLU \citep{li2023cmmlu} following the lm-evaluation-harness framework \citep{eval-harness}. We compare against ~\textbf{MiniMind-3}\citep{minimind}, which use identical parameters but with standard full causal RoPE attention (no sliding window), trained on the same data.

\section{Results}
\paragraph{Standard benchmark performance.}
Table~\ref{tab:bench} shows that MiniWin achieves 23.8\% on C-Eval and 25.0\% on CMMLU, closely matching MiniMind-3 (24.5\% and 25.3\%) given the same 26M parameter scale. This demonstrates that MiniWin's architecture does not degrade knowledge-retrieval ability.

\begin{table}[h]
\centering
\caption{Accuracy (\%) on Chinese evaluations.}
\label{tab:bench}
\resizebox{\columnwidth}{!}{
\begin{tabular}{lccc}
\toprule
\textbf{Model} & \textbf{Params} & \textbf{C-Eval} & \textbf{CMMLU} \\
\midrule
MiniMind-3 & 26M  & 24.5 & 25.3  \\
MiniWin   & 26M  & 23.8 & 25.0 \\
\bottomrule
\end{tabular}}
\end{table}

\paragraph{RULER long-context retrieval.}
We evaluate long-context retrieval using a RULER-style
Needle-in-a-Haystack (NIAH) task \citep{hsieh2024ruler} implemented
directly on MiniWin. Table~\ref{tab:ruler} reports single-needle retrieval accuracy across context lengths.

\begin{table}[h]
\centering
\caption{NIAH single-needle retrieval accuracy (\%) at different context
  lengths.  $W{=}64$; training context ${\approx}512$ tokens
  ($8{\times}W$).}
\label{tab:ruler}
\resizebox{\columnwidth}{!}{
\begin{tabular}{lccccc}
\toprule
\textbf{Model} & \textbf{256} & \textbf{512} & \textbf{1024} & \textbf{2048} & \textbf{4096} \\
\midrule
MiniMind-3 & 100 & 100 & 87 & 63 & 0.0 \\
MiniWin & 100 & 100 & 95 & 82 & 2.0 \\
\bottomrule
\end{tabular}}
\end{table}

 MiniWin achieves perfect retrieval within the training length (512 tokens) and maintains 95\% at 1024 tokens and 82\% at 2048 tokens, outperforming MiniMind-3. The preliminary results demonstrate that MiniWin's architecture enables stronger length generalization than the RoPE baseline. 

\section{Limitations}

\paragraph{Scale.}
Due to limited computational resources, we trained only a mini model. Whether the P-\rope{} will maintain the same advantage on models with hundreds of billions to trillions of parameters remains to be verified.

\paragraph{Rigorous theoretical explanation.}
As shown in \nameref{case2}, P-\rope{} produces the effective offset $d{-}W$ rather than the true distance $d$, causing the key to appear closer than it actually is.  Although we believe the model solved it in some way through learning, and experiments confirm that generation quality, formal proof is currently lacking to explain why  P-\rope{} works and whether there are any risks.

\paragraph{Window-size sensitivity.}
The periodic wrap-around rate is governed by $W$.  Very small windows
(e.g.\ $W=8$) may limit the expressiveness of local attention, while
very large windows increase KV-cache memory during decoding.  We have
not conducted a systematic ablation over $W$ at scale.

\section{Conclusion}

We present MiniWin, a decoder-only Transformer that combines P-\rope{} sliding window attention with NoPE global attention for infinite-context language modeling. Its core design is P-\rope{} with a sliding window size W as its period, ensuring the positional encodings never exhaust. Experiments show competitive retrieval accuracy beyond the training length. This simple design may encourage further exploration of P-\rope{} at scale.

\bibliography{ref}

\appendix
\section{Example Appendix}
\label{sec:appendix}


\begin{CJK*}{UTF8}{gbsn}
\begin{table}[h] 
\centering  
\caption{Some dialogue examples generated by the model MiniWin. It can produce coherent context, demonstrating the effectiveness of its structure.} 
\small
\begin{tabular}{p{\textwidth}}  
\toprule
\textbf{Q1:} Introduce Deep Learning

\textbf{A1:} Deep Learning is a type of artificial intelligence (AI) that allows machines to learn from large amounts of data, without being explicitly programmed. Include the key points to provide an accurate answer. Here's how you might consider it:\\

\#\# Applications and Tips\\
1. **Personalized Queries**: An AI system can be trained on large amounts of data, enabling tasks like recommendation, fraud detection, or text generation. This makes sense as an example of a specific model.\\
2. **New Data Experience**: AI algorithms can analyze large amounts of data in multiple datasets, helping identify patterns and trends. This helps researchers extract and understand complex information and insights into new applications.\\
3. **Automated Larger Costs**: The ability to predict whether to generate more accurate predictions, especially if data has high variability compared to human-based systems. This could help researchers develop new solutions for complex problems.\\
4. **Conservation**: A machine learning process can lead to a more accurate statement in healthcare, education, and other fields. This could potentially improve patient outcomes, including overall well-being and improved overall health outcomes.\\
5. **Limitedness and Best Efficiency**: Alternatively, AI models can also be used in areas where decisions were made to prevent losses. This would likely require more advanced tools such as natural language processing, image recognition, and deep neural networks.\\
6. **Opportunities in Productivity**: While there may be potential risks, AI models must be considered a technological innovation rather than a personal idea. This could be seen as an exponential or unavoidable technology used in manufacturing, healthcare, education, and social media.\\
7. **Global Concerns**: Some AI technologies have proposed significant impacts on global economies and fossilization efforts.\\

---

\#\# Important Factors\\
- Defining and expanding learning patterns, providing relevant resources, and training to perform tasks efficiently.\\
- Implementing decision-making processes based on input data and analysis, making learning decisions harder.\\
- Developing AI-powered chatbots and virtual assistants, which are essential for building a holistic future.\\
- Ultimately creating more intelligent models and resources that require AI-driven methods and tools. \\

\midrule

\textbf{Q1:} 介绍一下光合作用

\textbf{A1:} 光合作用是地球上生物体利用阳光、二氧化碳和水来制造食物的过程，是维持地球生态平衡的重要过程。它不仅为植物提供了能量来源，还对生态系统的稳定性至关重要。光合作用主要分为两种：光反应和暗反应

1. **光反应**：在光反应过程中，叶绿体中的叶绿素吸收太阳光，将其转化为化学能（如葡萄糖），同时释放出氧气。这个过程称为“光反应”。在这个过程中，叶绿素分子激发电子，从而产生ATP（三磷酸腺苷），这些有机物将二氧化碳和
水转化为糖类和氧气。

2. **暗反应**：在暗反应阶段，叶绿体中的CO2被释放到大气中，进入紫外线（UV）或X射线辐射中。这有助于调节地球的气候，维持生命活动。

3. **光合作用的分解与生长**：一旦植物获得足够的光能，它们就会通过光合作用从太阳的能量中提取能量，这个过程称为“光合作用”。在光合作用的过程中，叶绿体中的叶绿素能够吸收太阳光，进行光合作用，从而实现光合作用的高效行。

4. **光合作用的监测与管理**：光合作用的实施需要专业的波及光照条件，以确保光合作用的有效性和安全性。波及光能也需在环境条件下进行，避免过量的光照干扰，以保证植物生长的健康。

5. **营养循环**：光合作用的最终目标是将阳光转化为植物所需的能量，同时维持大气中的氧气含量。光合作用是地球生命维持生命活动的基础。

光合作用对于地球上的许多生物来说是一个独特的过程，它不仅为植物提供了必要的能量，还促进了生态系统的结构和功能，包括空气净化、土壤保持、气候调节等关键环节。
\\
\bottomrule
\end{tabular}
\label{tab:caption} 
\end{table}
\end{CJK*}

\end{document}